\newcommand{\cmark}{\ding{51}}%
\newcommand{\xmark}{\ding{55}}%
\newcommand{\St}{{\mathcal S}} 
\newcommand{\Ac}{{\mathcal A}} 
\newcommand{\T}{{P}} 
\newcommand{\R}{{r}} 
\newcommand{\id}{{\bm \mu}} 
\newcommand{\sd}{\bm d} 
\newcommand{\J}{J} 
\newcommand{\vR}{{\bm r}} 
\newcommand{\vpp}{{\bm \theta}} 
\newcommand{\Cost}{{c}} 
\newcommand{\vCb}{{\bm d}} 
\newcommand{\Cb}{{d}} 
\newcommand{\nC}{{m}} 
\newcommand{\V}{{V}} 
\newcommand{\Q}{{Q}} 
\newcommand{\A}{A} 
\newcommand{\Z}{Z} 
\newcommand{\Zf}{{\bm \Z}} 
\newcommand{\Y}{Y} 
\newcommand{\Yf}{{\bm \Y}} 
\newcommand{\NN}{{\Psi}} 
\newcommand{\Cf}{{\rho}} 
\newcommand{\jianyi}[1]{\iftoggle{final}{#1}{{\color{blue} #1}}}
\newcommand{\paul}[1]{\iftoggle{final}{#1}{{\color{orange} #1}}}
\newtheorem{theorem}{Theorem}
\title{Safe Distributional Reinforcement Learning}
\author[1]{\href{mailto:Jianyi Zhang <zhangjy97@sjtu.edu.cn>?Subject=Safe Distributional Reinforcement Learning}{Jainyi~Zhang}{}}
\author[1,2]{Paul Weng}
\affil[1]{%
    UM-SJTU Joint Institute\\
    Shanghai Jiao Tong University\\
    Shanghai, China
}
\affil[2]{%
    Department of Automation\\
    Shanghai Jiao Tong University\\
    Shanghai, China
}
\begin{document}
\maketitle

\begin{abstract}
    Safety in reinforcement learning (RL) is a key property in both training and execution in many domains such as autonomous driving or finance.
    In this paper, we formalize it with a constrained RL formulation in the distributional RL setting.
    Our general model accepts various definitions of safety (e.g., bounds on expected performance, CVaR, variance, or probability of reaching bad states).
    To ensure safety during learning, we extend a safe policy optimization method to solve our problem.
    The distributional RL perspective leads to a more efficient algorithm while additionally catering for natural safe constraints. 
    We empirically validate our propositions on artificial and real domains against appropriate state-of-the-art safe RL algorithms.
\end{abstract}

\section{Introduction}\label{sec:intro}

Reinforcement learning (RL) has shown great promise in various applications \citep{
SilverSchrittwieserSimonyanAntonoglouHuangGuezHubertBakerLaiBoltonChenLillicrapHuiSifreDriesscheGraepelHassabis17,jinDeepLearningAlibaba2017}.
As such techniques start to be deployed in real applications, safety in RL \citep{GarciaFernandez15} starts to be recognized as a key consideration both during learning, but also during execution after training.
Indeed, in many domains from medical applications to autonomous driving to finance, the actions chosen by an RL agent can have disastrous consequences and therefore the corresponding risks need to be controlled both during training, but also during execution.

While traditional RL does not take safety into account, recent work has started to studied it more actively.
Safety takes various definitions in the literature.
In its simplest sense, it means avoiding bad states \citep{geibel2005risk}, but it can take more general meaning such as decision-theoretic risk aversion \citep{Borkar10}, or risk constraints \citep{prashanth2016variance}, satisfaction of logic specifications \citep{alshiekh2017safe}, but also simple bounds on expected cumulated costs \citep{yu2019convergent}.

For a given definition of safety, one may want to learn a policy that satisfies it, without constraining the training process. 
Such approach would provide a safe policy for deployment after training.
In contrast, recent work in safe RL aims at enforcing safety during learning as well, which is a difficult task as the RL agent needs to explore.



This paper follows this latter trend and safety is formulated as the satisfaction of a set of general constraints on distributions of costs or rewards.
Thus, a safe policy is defined as a policy that respects some constraints in expectation or in probability.
Our goal is to learn among safe policies one that optimizes the usual expected discounted sum of rewards.
Furthermore, we also require safe learning, i.e., the safety constraints shall be satisfied during training as well.

To that aim, we first propose a general framework that accepts various safety formulations from bounds on CVaR to variance, to probability of reaching bad states.
This general framework is made amenable by formulating the problem in the distributional RL setting, where distributions of returns are learned in contrast to their expectations. 
Based on this general distributional formulation, we extend an existing safe RL algorithm, Interior-Point Policy Optimization (IPO) \citep{liu2019ipo}, to the distributional setting, for which we formulate a performance bound.
\paragraph{Contributions}
Our contributions are threefold: 
(1) We propose a general framework for safe RL where safety is expressed as the satisfaction of risk constraints, which is enforced during and after training.
A risk constraint can be expressed as any (sub)differentiable function of a random variable representing a cumulative reward or cost.
(2) In order to obtain a practical algorithm, we formulate our problem and solution method in the distributional RL setting.
(3) Our proposition, called SDPO, is empirically validated on multiple domains with various risk constraints against relevant state-of-the-art algorithms.

\section{Related Work}\label{sec:related}

Safe RL is becoming an important research direction in RL \citep{GarciaFernandez15}.
In this paper, we distinguish three main non-exclusive aspects for safe RL:
policy safety, 
algorithmic safety, and
exploration safety in exploration.

\begin{table}[t]\small
    \centering
    \begin{tabular}{@{}cccccc@{}}
    \toprule
    Algorithm & PD & CPO &  IPO & PCPO & SDPO \\
    \midrule
    Expectation & \cmark & \cmark & \cmark & \cmark & \cmark\\
    Variance & \cmark & \xmark & \xmark & \xmark & \cmark\\
    CVaR & \cmark & \xmark & \xmark & \xmark & \cmark\\
    (Sub)differentiable fun. & \xmark & \xmark & \xmark & \xmark & \cmark\\
    \midrule
    Safe learning & \xmark & \cmark & \cmark & \cmark & \cmark\\
    Safe execution & \cmark & \cmark & \cmark & \cmark & \cmark\\
    \bottomrule
    \end{tabular}
    \caption{Summary of related algorithms: which constraints are accepted, whether safety is guaranteed during learning/execution. PD (primal-dual) actually corresponds to several algorithms.}
    \label{tab:summary}
\end{table}

Policy safety corresponds to the goal of learning a safe policy such that its execution would avoid/limit the occurrence of bad outcomes (e.g., probability of reaching bad states or bound on performance).
Safety can be modeled as additional constraints or penalization.
In that sense, safe RL is related to risk-sensitive RL 
\citep{Borkar10,chow2014algorithms,ChowTamarMannorPavone15} where the goal is to learn a policy that optimizes a risk-sensitive objective function,  constrained RL \citep{AchiamHeldTamarAbbeel17,tessler2018reward,miryoosefi2019reinforcement,liu2019ipo} where the goal is to learn a policy that optimizes some constraints, and risk-constrained RL \citep{geibel2005risk,BorkarJain14,prashanth2016variance,ChowGhavamzadehJansonPavone16,brazdil2020reinforcement}, which in some sense combines the previous settings.
The works in those three areas, with a few exceptions (CPO \citep{AchiamHeldTamarAbbeel17}, IPO \citep{liu2019ipo}, PCPO \citep{yang2020projection}), do not provide any safety guarantee during learning.
They are based on a primal-dual approach (PD).
For the exceptions, they can only accept simple constraints on expected discounted total costs.
Notably, our algorithm, called Safe Distributional Policy Optimization (SDPO), builds on IPO \citep{liu2019ipo} and extends it to the distributional RL setting, which then allows the formulation of sophisticated constraints.
See Table~\ref{tab:summary} for a summary.

Algorithmic safety corresponds to the idea that running a safe RL algorithm should also guarantee some safety property, e.g., continuous policy improvement \citep{PirottaRestelliPecorinoCalandriello13}, convergence to stationary point \citep{yu2019convergent}, satisfaction of logic specifications \citep{alshiekh2017safe},  satisfaction of  constraints \citep{AchiamHeldTamarAbbeel17,yang2020projection} during learning.
However, none of those propositions can take into account sophisticated safety constraints (e.g., on risk measure).

Exploration safety focuses on an important aspect of safe RL: the exploration problem during learning in order to limit/avoid selecting dangerous actions..
In this context, safety is generally modeled as avoiding bad states.
One main line of work \citep{turchetta2016safe,berkenkamp2017safe,WachiSuiYueOno18,cheng2019end} tries to prevent the choice of a bad action by learning a model.
Other directions have been explored, for instance, by using a verification method \citep{fulton2018safe} or by correcting a chosen action 
\citep{dalal2018safe}.
However, this type of approaches requires the assumption that the environment is deterministic.

Although research has been active in safe RL, to the best of our knowledge, no efficient algorithm has been proposed for the general framework that we propose. 
In particular, our proposition can learn a risk-constrained policy while ensuring the satisfaction of the risk constraint during learning.
Our proposition is based on distributional RL \citep{bellemare2017distributional}, which has demonstrated that estimating distributions of returns instead of their expectations can ensure better overall performance of RL algorithms.
Most work \citep{dabney2018implicit,NeurIPS2019_8850} in this area focuses on value-based methods, extending mostly the DQN algorithm \citep{MnihKavukcuogluSilverRusuVenessBellemareGravesRiedmillerFidjelandOstrovskiPetersenBeattieSadikAntonoglouKingKumaranWierstraLeggHassabis15}.
However, one recent work has also investigated the extension of the distributional setting to policy optimization \citep{barth2018distributed}.
Our work is based on the IQN algorithm \citep{dabney2018implicit} instead of more recent propositions (e.g., \citep{NeurIPS2019_8850}) because of its simplicity and because it perfectly fits our purposes.
Note that in IQN, the authors consider optimizing a risk-sensitive objective function, but they do not consider constraints, as we do.


\section{Background}\label{sec:background}

In this section, we present the notations, recall the definition of a Markov Decision Process (MDP) as well as its extension to Constrained Markov Decision Process (CMDP), and review the notions (e.g., CVaR) and the related deep RL algorithms, which we use to formulate our method.

\paragraph{Notations}
For any set $X$, $\Delta(X)$ denotes the set of probability distributions (or densities if $X$ is continuous) over $X$.
For any function $f: Y \to \Delta(X)$ and
any $(x, y) \in X \times Y$, $f(x \mid y)$ denotes the probability (or density value if $X$ is continuous) of obtaining $x$ according to $f(y)$.
For any $n \in \mathbb N$, $[n]$ denotes $\{1, 2, \ldots, n\}$.
Vectors (resp. matrix) will be denoted in bold lowercase (resp. uppercase) with their components in normal font face with indices.
For instance, $\bm v = (v_1, \ldots, v_n) \in \mathbb{R}^n$ or $\bm M = (m_{ij})_{i \in [n], j \in [m]} \in \mathbb{R}^{n \times m}$.

\paragraph{MDP Model}
A \textit{Markov Decision Process} (MDP) \citep{sutton2018reinforcement} is defined as a tuple $(\St,\Ac,\T,\R,\id,\gamma)$, where $\St$ is a set of states, $\Ac$ is a set of actions, $\T:\St \times \Ac \to \Delta(\St)$ is a transition function, 
$\R: \St \times \Ac \to \mathbb{R}$ is a reward function, 
$\id \in \Delta(\St)$ is a distribution over initial states, and $\gamma \in [0, 1)$ is a discount factor. 
In this model, a policy $\pi: \St \to \Delta(\Ac)$ is defined as a mapping from states to distributions over actions.
We also use notation $\pi_\vpp$ to emphasize that the policy is parameterized by $\vpp$ (e.g., parameters of neural network).
In the remaining, we identify $\pi_\vpp$ to its parameter $\vpp$ for ease of  notation.
The usual goal in an MDP is to search for a policy that maximizes the expected
discounted total reward:
\begin{equation}
\J(\vpp)=\mathbb{E}_{\id, \T, \pi_\vpp}[{\textstyle\sum_{t=0}^{\infty}\gamma^t\R(s_t,a_t )}] 
\end{equation}
where $\mathbb E_{\id, \T, \pi_\vpp}$ is the expectation with respect to the distribution $\id$, the transition function $\T$, and $\pi_{\vpp}$.
We define the \textit{(state) value function} of a policy $\pi_\vpp$ for state $s$ as:
\begin{equation}
\V^\vpp(s)=\mathbb{E}_{\T,\pi_\vpp} [{\textstyle\sum_{t=0}^{\infty}\gamma^t\R(s_t,a_t )}|s_0=s]
\end{equation}
where $\mathbb E_{\T, \pi_\vpp}$ is the expectation with respect to the transition function $\T$ and $\pi_\vpp$.
The \textit{(action) value function} is defined as follows:
\begin{equation}
\Q^\vpp(s,a)=\mathbb{E}_{\T,\pi_\vpp}[{\textstyle\sum_{t=0}^{\infty}\gamma^t\R(s_t,a_t)}|s_0=s,a_0=a]
\end{equation}
and the \textit{advantage function} is defined as:
$\A^\vpp(s,a)=\Q^\vpp(s,a)-\V^\vpp(s)$.
As there is no risk of ambiguity, to avoid clutter we drop $\id$ and $\T$ in the notation of the expectation from now on. 

Reinforcement learning (RL) is based on MDP, but in RL, the transition and reward functions are not assumed to be known.
Thus, in (online) RL, an optimal policy needs to be learned by trial and error.

\paragraph{CMDP Model}
The MDP model can be extended to the \textit{Constrained MDP} (CMDP) setting \citep{Altman99} in order to handle constraints.
In a CMDP, $\nC$ cost functions $\Cost_i:\St\times \Ac\to \mathbb{R}$ for $i \in [\nC]$ are introduced in addition to the original rewards.
For each cost function $\Cost_i$, the corresponding value functions can be defined.
They are denoted with a subscript, e.g., $\J_{\Cost_i}$, $\V_{\Cost_i}$, or $\Q_{\Cost_i}$. 
For a CMDP, the goal is to find a policy that maximizes the expected discounted total reward while satisfying constraints on the expected costs $\J_{\Cost_i}(\vpp)$: 
\begin{equation}
\max_\vpp \J(\vpp) 
\mathrm{~s.t.~} \J_{\Cost_i}(\vpp) 
\le \Cb_i \quad \forall i \in [\nC], \label{eq:constraintCMDP}
\end{equation}
where $\vCb = (\Cb)_{i \in [\nC]} \in \mathbb R^\nC$ is a fixed vector constraint bound.

\paragraph{Proximal Policy Optimization}

The family of policy gradient methods constitutes the standard approach for tackling an RL problem when considering parametrized policies.
Such a method iteratively updates a policy parameter in the direction of a gradient given by
\citep{sutton2018reinforcement}:
\begin{equation*}
 \nabla_\vpp \J(\vpp)=\mathbb{E}_{(s, a) \sim \sd^{\pi_\vpp}}[\A^\vpp(s, a) \nabla_\vpp\log\pi_\vpp(a \mid s) ]
\end{equation*}
where the expectation is taken with the respect to the state-action visitation distribution of $\pi_\vpp$.
One issue in applying a policy gradient method is the difficulty of estimating $\A^\vpp$ online.
This issue motivates the use of an actor-critic scheme where an actor ($\pi_\vpp$) and a critic (e.g., $\A^\vpp$ or $\V^\vpp$ depending on the specific algorithm) are simultaneously learned. 
Learning the value function can help the policy update, such as reducing the gradient variance.

Proximal Policy Optimization (PPO)  \citep{SchulmanWolskiDhariwalRadfordKlimov17} is a state-of-the-art actor-critic algorithm, which optimizes instead a clipped surrogate objective function $\J_{PPO}(\vpp)$ defined by:
\begin{align}
\textstyle\sum_{t=0}^\infty\min(\omega_t(\vpp)\A^{\bar\vpp}(s_t,a_t),\text{clip} (\omega_t(\vpp),\epsilon)\A^{\bar\vpp}(s_t,a_t)),   
\end{align}
where $\bar\vpp$ is the current policy parameter, $\omega_t(\vpp)=\frac{\pi_\vpp(a_t|s_t)}{\pi_{\bar\vpp}(a_t|s_t)}$, and clip$(\cdot,\epsilon)$ is the function to clip between $[1-\epsilon,1+\epsilon]$.
This surrogate function was motivated as an approximation of that used in TRPO \citep{SchulmanLevineAbbeelJordanMoritz15}, which was introduced to ensure monotonic improvement after a policy parameter update.
Although PPO is more heuristic than TRPO, its advantages are its simplicity and lower sample complexity.

\paragraph{Distributional Reinforcement Learning}\label{subsec:DistributionalRL}

The key idea in distributional RL \citep{bellemare2017distributional} is to learn a random variable to represent the discounted return $\Zf^\vpp(s, a)=\sum_{t=0}^{\infty}\gamma^t \vR_t$ where $\vR_t$ is the random variable representing the immediate reward received at time step $t$ when applying action $a$ in state $s$ and policy $\pi_\vpp$ thereafter.
In contrast, standard RL algorithms directly estimate the expectation of $\Zf^\vpp(s, a)$, since 
$\Q^\vpp(s, a)=\mathbb{E}_{\Zf^\vpp}[\Zf^\vpp(s, a)]$ where the expectation is with respect to the distribution of $\Zf^\vpp(s, a)$.


Recall that any real random variable $\Z$ can be represented by its cumulative distribution denoted $F_\Z(z) = \mathbb P(\Z \le z) \in [0, 1]$, or equivalently by its quantile function (inverse cumulative distribution) denoted $F^{-1}_\Z(p) = \inf \{ z\in \mathbb R \mid p \le F_\Z(z)\}$ for any $p \in [0, 1]$.
For ease of notation, $Z_p$ denotes the $p$-\textit{quantile} $F^{-1}_\Z(p)$.
In the \textit{Implicit Quantile Network} (IQN) algorithm, \citet{dabney2018implicit} proposed to approximate the quantile function of $\Zf(s, a)$ with a neural network and to learn it using quantile regression \citep{Koenker05}.

Concretely, the quantile function of $\Zf(s, a)$ can be learned as follows.
Denote $\hat\Zf(s, a)$ the approximated random variable whose quantile function is given by a neural network $\NN(s, \tau)$, which takes as input a state $s$ and a probability $\tau \in [0, 1]$ and returns the corresponding $\tau$-quantile $\hat\Zf_\tau(s, a)$ for each action $a$.
After observing a transition $(s, a, r, s')$, $\NN$ can be trained by sampling $2N$ values $\bm\tau = (\tau_1, \ldots, \tau_N)$ and $\bm\tau' = (\tau'_1, \ldots, \tau'_N)$ with the uniform distribution on $[0, 1]$.
By inverse transform sampling, sampling $\bm\tau$ amount to sampling 
$N$ values from $\hat\Zf(s, a)$ corresponding to $\hat\Zf_{\tau_1}(s, a), \ldots, \hat\Zf_{\tau_N}(s, a))$,
and similarly for $\bm\tau'$ and sampling from $\hat\Zf(s', \pi(s'))$ where $\pi$ is the current policy.
Those samples define $N^2$ TD errors in the distributional setting: 
\begin{equation}
\delta_{ij} = r + \gamma \hat\Zf_{\tau'_j}(s', \pi(s')) - \hat\Zf_{\tau_i}(s, a)
\end{equation}
Following quantile regression, the following loss function for training the neural network $\NN$ in $(s, a, r, s')$ is given by:
\begin{equation}
L_{IQN} = \frac{1}{N}\textstyle\sum_{i \in [N]} \sum_{j \in [N]} \xi_{\tau_i}^\kappa(\delta_{ij}) \label{eq:LIQN}
\end{equation}
where for any $\tau \in (0, 1]$,
$\xi_{\tau}^\kappa(\delta_{ij})=|\tau-\mathbb{I}(\delta_{ij}<0)|\frac{L_\kappa(\delta_{ij})}{\kappa}$
is the quantile Huber loss with threshold $\kappa$ with 
$L_\kappa(\delta)=
\frac{1}{2}\delta^2$ for $|\delta|\le\kappa$ or $\kappa(|\delta|-\frac{1}{2}\kappa)$ otherwise.

\paragraph{Interior-Point Policy Optimization}\label{subsec:IPO}

In the CMDP setting, 
Interior-point Policy Optimization (IPO)  \citep{liu2019ipo} is a recent RL algorithm to maximize an expected discounted total rewards while satisfying constraints on some expected discounted total costs.
To deal with a constraint, IPO augments PPO's objective function with a logarithmic barrier function applied to it, which provides a smooth approximation of the indicator function.
The constrained problem then becomes an unconstrained one with an augmented objective function:
\begin{equation}
\max_\vpp \J_{IPO}(\vpp)=\J_{PPO}(\vpp) + \textstyle\sum_{i\in [\nC]}\frac{\ln(\Cb_i - \J_{\Cost_i}(\vpp))}{\eta}, \label{eq:ipo}
\end{equation}
where $\eta$ is a hyper-parameter.
As $\eta$ tends to $\infty$, the solution of \eqref{eq:ipo} tends to that of the original constrained problem.
The objective $J_{IPO}$ is differentiable, therefore, we can apply a gradient-based optimization method to update the policy.

\section{Problem Formulation}\label{sec:problem}

Let $\Delta(\mathbb R)$ denote the set of real random variables. 
Therefore, $\Zf \in \Delta(\mathbb R)^{\St}$ denotes a function from states to random variables.
Given an (unknown) CMDP, the problem tackled in this paper can be expressed as a constrained optimization problem formulated in the distributional RL setting:
\begin{align}
\max_{\vpp}~~ & 
\mathbb{E}_{s_0\sim\id, \Zf^\vpp}[\Zf^{\vpp}(s_0)] \label{eq:prob1}\\
\text{s.t.}~~ & 
\Cf_{i}(\Yf_i^{\vpp}) \le \Cb_i \quad \forall i \in [\nC] \label{eq:prob2}
\end{align}
where $\Zf^\vpp(s)$ corresponds to the return distribution generated by policy $\pi_\vpp$ from the reward function, 
for all $i\in [\nC]$, 
$\Yf^\vpp_i(s)$ corresponds to the cumulated cost distribution from cost function $\Cost_i$, and
$\Cf_i : \Delta(\mathbb R)^{\St} \to \mathbb R$ is a (sub)differentiable function. 
Note that this formulation is strictly more general than problem \eqref{eq:constraintCMDP} thanks to the \paul{possibly non-linear} functions $\Cf_i$'s.

\begin{table}[t]\small
    \centering
    \begin{tabular}{ll}
        \toprule
        $\Cf_i$ & Definition \\
        \midrule
        Expectation & $\mathbb{E}_{s_0\sim\id, \Yf}[\Yf(s_0)]$ \\
        Prob. of bad states & $\mathbb{E}_{s_0\sim\id, \Yf}[\Yf(s_0)]$ \\
        $\alpha$-CVaR of rewards & $ \mathbb{E}_{s_0\sim\id}[ \frac{1}{\alpha} \int_{0}^\alpha \Y_\zeta(s_0) d\zeta ]$ \\
        Variance & $ \mathbb{E}_{s_0\sim\id}[  \mathbb{E}_{\Yf}[\Yf(s_0)^2] - \mathbb{E}_{\Yf}[\Yf(s_0)]^2]$ \\
        \bottomrule
    \end{tabular}
    \caption{Common examples for $\Cf_i$.}
    \label{tab:rho}
\end{table}

We recall a few common cases for $\Cf_i$ in Table~\ref{tab:rho}.
The expectation is a simple example.
For episodic MDPs with absorbing bad states, another simple example is the probability of bad states, which is defined like the expectation, but applied to a undiscounted cost equal to $1$ for a bad state and $0$ otherwise.
%
CVaR is a widely-used risk measure in finance.
In this context, the $\alpha$-CVaR of a portfolio is intuitively its expected return in the worst $\alpha\times100\%$ cases.
Here, we adapted the definition to rewards (instead of costs).
Naturally, a CVaR of an additional cost would also be possible.
In contrast to previous methods, our framework can accept any (sub)differentiable definitions for $\Cf_i$ (e.g., coherent risk measures).

Note that we chose to take the mean (over initial states) of the CVaRs instead of the CVaR of the mean.
The latter would have been possible as well, but because CVaR is a convex risk measure, our definition is an upperbound of the CVaR of the mean, which means that our formulation is more conservative and in that sense, safer.
The same trick applies if $\Cf_i$ were defined based on any other coherent risk measure, of which CVaR is only one instance.
%
Similarly, for the variance, we use
the mean (over initial states) of variances instead of the other way around.
Since the initial states are sampled in an independent way, the $\Yf(s_0)$'s are independent.
This means that our definition upperbounds  the variance of the mean of the $\Yf(s_0)$'s, leading to a more cautious formulation, which is more desirable for safe RL.


In this paper, we define a \textit{safe policy} as a policy satisfying constraints \eqref{eq:prob2}.
Our goal is to learn a policy maximizing an expected discounted total rewards \eqref{eq:prob1} among all safe policies (i.e., safe execution).
Besides, we require that any policy used during learning be safe (i.e., safe learning).

The formulation of \eqref{eq:prob1}-\eqref{eq:prob2} in the distributional RL setting serves two purposes.
First, as observed in distributional RL, estimating the distributions of the cumulated rewards improves the overall performance.
Second, many safety constraints \eqref{eq:prob2}, such as CVaR, become natural and simple to express in the distributional setting. 

\section{Proposed Method}\label{sec:method}

To solve problem \eqref{eq:prob1}-\eqref{eq:prob2} in the safe RL setting, we extend IPO to the distributional RL setting and combine it with an adaptation of IQN.
Next, we explain the general principle of our approach, and then discuss some techniques to obtain a concrete efficient implementation.

\subsection{General Principle}




To adapt IPO, we rewrite the surrogate objective function used in PPO in the distributional setting:
\begin{align}
\J_{PPO}(\vpp) & = \sum_{t=0}^\infty \min(\omega_t(\vpp) \mathbb E_\vpp[\Zf^{\bar\vpp}(s_t, a_t) - \Zf^{\bar\vpp}(s_t)], \notag\\
&  \text{clip} (\omega_t(\vpp),\epsilon) \mathbb E_\vpp[\Zf^{\bar\vpp}(s_t, a_t) - \Zf^{\bar\vpp}(s_t)]).
\end{align}
Problem \eqref{eq:prob1}-\eqref{eq:prob2} can then be tackled by iteratively solving the following problem with this surrogate function:
\begin{equation}
\max_{\vpp}~~  
\J_{PPO}(\vpp) ~~
\text{s.t.}~~ 
\Cf_{i}(\Yf_i^{\vpp}) \le \Cb_i \quad \forall i \in [\nC]. \label{eq:prob2PPO}
\end{equation}
Now, following IPO, using the log barrier function, we reformulate problem \eqref{eq:prob2PPO} as an unconstrained problem:
\begin{align}
\max_{\vpp}~~ & 
\J_{PPO}(\vpp) + \sum_{i \in [\nC]} \frac{\ln(\Cb_i - \Cf_{i}(\Yf_i^{\vpp}))}{\eta_i}. \label{eq:prob}
\end{align}
In contrast to convex optimization \citep{BoydVandenberghe04}, we introduce a constraint-dependent hyperparameter $\eta_i$ to better control the satisfaction of each constraint \paul{separately}.


Finally, we propose to solve problem~\eqref{eq:prob} with an actor-critic architecture where both the actor and the critic are approximated with neural networks.
For the critic, we adapt the approach proposed for IQN \citep{dabney2018implicit} to learn random returns $\Zf$ and random cumulated costs $\Yf_i$'s.
For the actor, parameter $\vpp$ of policy $\pi_{\vpp}(a|s)$ is updated in the direction of the gradient of the objective function \paul{defined} in \eqref{eq:prob}:
\begin{align}
    \nabla_\vpp \J_{PPO}(\vpp) - \sum_{i \in [\nC]} \frac{1}{\eta_i} \frac{\nabla_\vpp \Cf_i(\Yf^\vpp_i)}{\Cb_i - \Cf_i(\Yf^\vpp_i)}.\label{eq:gradient}
\end{align}
\paul{This gradient raises one difficulty regarding the computation of $\nabla_\vpp \Cf_i(\Yf^\vpp_i)$, which corresponds to the gradient of a critic with respect to the parameters of the actor.
When $\Cf_i$ is linear (i.e., for expectation constraints), the policy gradient theorem \citep{SuttonMcAllesterSinghMansour00} applies and specifies how to compute $\nabla_\vpp \Cf_i(\Yf^\vpp_i)$.
However, when $\Cf_i$ is non-linear (i.e., for more sophisticated risk constraints), the gradient in \eqref{eq:gradient} cannot be obtained easily.
To solve this issue, we propose a simple and generic solution, which consists in connecting the actor network to any critic network with a non-linear $\Cf_i$ (see \Cref{fig:structure} for an illustration where only one critic corresponding to non-linear $\Cf_i$ is displayed).
Using this construct,
}
the exact gradient of $\Cf_i(\Yf^\vpp_i)$ can be computed by automatic differentiation if $\Cf_i$ is (sub)differentiable and $\Yf^\vpp_i$ is approximated with a neural network, as we assume.
\paul{Note that in previous work, \cite{dabney2018implicit} who proposed to optimize a risk measure in IQN did not face this gradient issue because their algorithm is based on DQN \citep{MnihKavukcuogluSilverRusuVenessBellemareGravesRiedmillerFidjelandOstrovskiPetersenBeattieSadikAntonoglouKingKumaranWierstraLeggHassabis15} and therefore does not have an actor network.
As a side note, this construct could be used to deal with a more general problem than \eqref{eq:prob1}-\eqref{eq:prob2} where a non-linear transformation is also applied on the objective function.
For instance, one may want to optimize the CVaR of some rewards subject to some other risk constraints, which is as far as we know a completely novel problem.
We leave this investigation to future work.
}
\begin{figure}[t]
	\centering
	\includegraphics[trim=10pt 0pt 0pt 0pt,clip,width=3in]{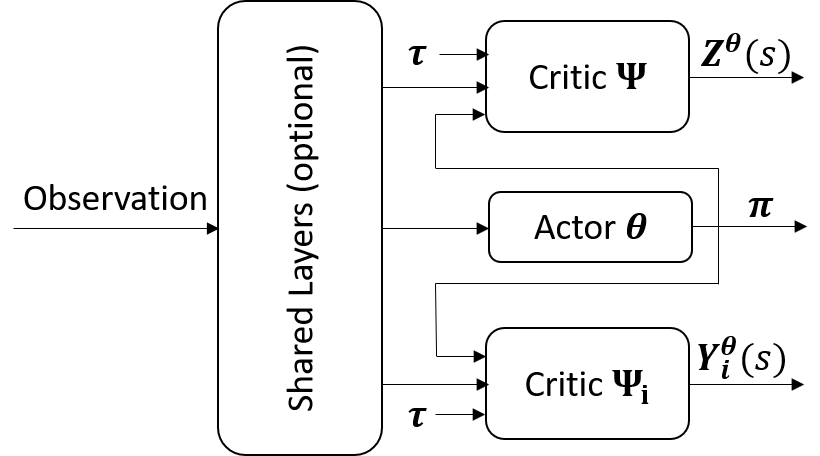}
	\caption{Architecture of SDPO \paul{where critic $\Psi$ corresponds to the objective function and critic $\Psi_i$ corresponds to constraint $i$. Both critics outputs a distribution}.}
	\label{fig:structure}
\end{figure}

Like any interior point method, an initial feasible (i.e., safe) solution is needed.
This requirement is actually not as strong as it seems.
In many MDPs (or CMDPs), there is a known safe action for every state.
For instance, in navigation problem, the action of not moving is safe if the current state is safe.
In finance, investing in cash or a risk-free asset is safe.
For many problems, a dummy action that does not have any effect can be added to define an initial safe action.
More generally, when such a simple safe policy cannot be defined, an expert could possibly provide this initial safe policy or it could be obtained by pretraining with an imperfect simulator.

\subsection{Techniques for Efficient Implementation}

In this section, to simplify notations, we do not write the superscript $\vpp$ for the random variables $\Zf$ and $\Yf_i$'s.

To make our final algorithm more efficient, we propose to learn $\Zf(s)$ only, instead of $\Zf(s, a)$ as it is the usual practice in distributional RL. 
This serves two purposes: (1) a state-dependent distribution is easier to learn, and (2) the advantage function can be easily estimated from a state value function alone.
Note that for the constraints only $\Yf_i(s)$ is needed for any $i \in [\nC]$.
Recall that the two random variables $\Zf(s)$ and $\Zf(s, a)$ are related by the following equation:
\begin{align}
    \Zf(s, a) = R(s, a) + \gamma\mathbb E_{s' \sim \T(\cdot \mid s, a)}[ \Zf(s')]
\end{align}
Following IQN, random variable $\Zf(s)$ is approximated by a random variable $\hat\Zf$, which is represented by a neural network.
The expectation of $\Zf(s)$ can then be approximated by that of $\hat\Zf(s)$ with $\bm\tau$ randomly uniformly sampled in $[0, 1]$:
\begin{equation}
\mathbb E_\vpp[ \Zf(s) ] \approx \sum_{i=1}^{N}(\tau_i-\tau_{i-1})  \hat\Zf_{\tau_i}(s).
\label{eq:Z}
\end{equation}
setting $\tau_0=0$ by convention and assuming $0 < \tau_1 < \tau_2 < \ldots < \tau_N < 1$.

The exact handling of the constraints depend on the definition of $\Cf_i$.
As illustrative examples, we explain how they can be computed for some concrete cases.
If $\Cf_i$ is simply defined as an expectation, it can be dealt with like the objective function.
For CVaR, it can be estimated as follows for a random variable $\Yf(s_0)$:
\begin{equation}
c_\alpha(\Yf) \approx c_\alpha(\hat\Yf) = \frac{1}{\alpha}\sum_{i \mid \tau_i\le\alpha} (\tau_i-\tau_{i-1}) \hat\Yf_{\tau_i}(s_0) \label{eq:c_alpha}
\end{equation}
Here, in contrast to the standard expectation (e.g., \eqref{eq:Z}), an implementation trick consists in sampling $\bm\tau$ in $[0, \alpha]$ such as $\tau_1 < \tau_2 < \ldots < \tau_N=\alpha$ since \eqref{eq:c_alpha} corresponds to the expectation conditioned on event ``$\hat\Yf \le \hat\Yf_{\alpha}$''.
For the variance, $\Cf_i(\Yf)$ can be estimated by:
\begin{equation}
    \sum_{i=1}^{N}(\tau_i-\tau_{i-1}) \hat\Yf_{\tau_i}(s_0)^2 - \left(\sum_{i=1}^{N} (\tau_i-\tau_{i-1}) \hat\Yf_{\tau_i}(s_0) \right)^2
\end{equation}

The pseudo code of our method is shown in Algorithm~\ref{alg:method}.
\begin{algorithm}[t]\small
	\caption{SDPO} \label{alg:method}
	\begin{algorithmic}[1]
	\REQUIRE Constraint bound $\vCb$,
	Initial policy network $\pi_{\vpp_0}$, 
	Initial IQN network $\bm \NN_0$,
	Hyperparameters $\epsilon$ for PPO clip rate and $\eta_i$ for each logarithmic barrier function.
	\FOR{$k=0, 1, \ldots$}
		\STATE $\mathcal B \gets$ run policy $\pi_{\vpp}$ for N trajectories
		\STATE \# update the IQN network
		\STATE Sample $\tau_1<$ ... $<\tau_N$ from $\mathcal U[0,1]$
		\STATE \# quantile regression
		\FOR {$i,j\in[N]$}
			\STATE $\delta_{ij} = r + \gamma \hat\Zf_{\tau'_j}(s', \pi(s')) - \hat\Zf_{\tau_i}(s, a)$
		\ENDFOR
		
		\STATE Update $\bm\NN_{k+1}$ with $\nabla L_{IQN}$ (see \eqref{eq:LIQN}) using $\mathcal B$
		\STATE Update $\vpp_{k+1}$ with $\nabla J(\vpp_k)$ defined in \eqref{eq:gradient} using $\mathcal B$ 
	\ENDFOR
	
	\end{algorithmic}
	
	\label{algorithm1}
\end{algorithm}

\subsection{Performance Guarantee Bound}

For fixed $\bm\eta$, solving \eqref{eq:prob} instead of \eqref{eq:prob2PPO} may incur a performance loss, which can be bounded under natural conditions, which we discuss below.
Since this result uses weak Lagrange duality, we first recall the definition of the Lagrangian of \eqref{eq:prob2PPO}:
\begin{align*}
    \mathcal L(\vpp, \bm\lambda) = \J_{PPO}(\vpp) + \sum_{i \in [\nC]} \lambda_i (\Cb_i - \Cf_i(\Yf^\vpp_i))
\end{align*}
and its dual function:
$ 
    g(\bm\lambda) = \max_{\vpp} \mathcal L(\vpp, \bm\lambda)
$.
The following bound can be proven:
\begin{theorem}
If $\vpp^*_1$ is an optimal solution of \eqref{eq:prob2PPO}, $\vpp^*_2$ is the strictly feasible optimal solution of \eqref{eq:prob} and the unique stationary point of $\mathcal L( \cdot , \bm\lambda^*)$ with $\lambda^*_i = \frac{1}{\eta_i (\Cb_i - \Cf_i(\Yf^{\vpp^*_2}_i))}$
then:
\begin{equation}
\J_{PPO}(\vpp^*_1) - \J_{PPO}(\vpp^*_2) \le \sum_{i \in [d]} \frac{1}{\eta_i}
\end{equation}
\end{theorem}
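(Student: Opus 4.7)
The plan is to bound $\J_{PPO}(\vpp^*_1)$ from above via weak Lagrange duality using the multipliers $\bm\lambda^*$ specified in the theorem, then show that the resulting dual value collapses exactly to $\J_{PPO}(\vpp^*_2) + \sum_i 1/\eta_i$. First I would verify that $\bm\lambda^*$ is an admissible dual vector for \eqref{eq:prob2PPO}: by strict feasibility of $\vpp^*_2$, each slack $\Cb_i - \Cf_i(\Yf^{\vpp^*_2}_i)$ is positive, so $\lambda^*_i > 0$, as required for nonnegativity of multipliers attached to the inequality constraints $\Cf_i(\Yf^\vpp_i) \le \Cb_i$.

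Next I would apply weak duality. For any feasible $\vpp$ we have $\mathcal L(\vpp, \bm\lambda^*) \ge \J_{PPO}(\vpp)$ because the constraint slacks are nonnegative and $\bm\lambda^* \ge 0$; taking the supremum on the right-hand side over feasible $\vpp$ and the supremum on the left-hand side over all $\vpp$ yields $\J_{PPO}(\vpp^*_1) \le \max_\vpp \mathcal L(\vpp, \bm\lambda^*) = g(\bm\lambda^*)$.

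Then I would evaluate $g(\bm\lambda^*)$ using the stationarity hypothesis. Computing $\nabla_\vpp \mathcal L(\vpp, \bm\lambda^*) = \nabla_\vpp \J_{PPO}(\vpp) - \sum_i \lambda^*_i \nabla_\vpp \Cf_i(\Yf^\vpp_i)$ and comparing with the first-order optimality condition for \eqref{eq:prob}, namely $\nabla_\vpp \J_{PPO}(\vpp^*_2) - \sum_i \frac{1}{\eta_i(\Cb_i - \Cf_i(\Yf^{\vpp^*_2}_i))} \nabla_\vpp \Cf_i(\Yf^{\vpp^*_2}_i) = 0$, shows that $\vpp^*_2$ is a stationary point of $\mathcal L(\cdot, \bm\lambda^*)$ precisely because of the choice of $\lambda^*_i$. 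Combined with the uniqueness hypothesis this promotes $\vpp^*_2$ to the arg-max, so $g(\bm\lambda^*) = \mathcal L(\vpp^*_2, \bm\lambda^*) = \J_{PPO}(\vpp^*_2) + \sum_i \lambda^*_i (\Cb_i - \Cf_i(\Yf^{\vpp^*_2}_i))$. Substituting the definition of $\lambda^*_i$ cancels the slack factor against its reciprocal, leaving $\J_{PPO}(\vpp^*_2) + \sum_i 1/\eta_i$. Rearranging with the duality inequality gives the claim.

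The one delicate point is precisely the step where ``unique stationary point of $\mathcal L(\cdot, \bm\lambda^*)$'' is upgraded to ``global maximizer'': in general, uniqueness of a critical point does not imply global optimality, and a careful proof would need either to invoke some implicit regularity (e.g., concavity of $\J_{PPO}$ together with convexity of each $\Cf_i$, so that $\mathcal L(\cdot, \bm\lambda^*)$ is concave and its unique stationary point is its unique maximizer), or to add the attainment of the supremum in an appropriate interior region as an explicit assumption. Once this identification is granted, the remainder of the argument is bookkeeping in weak duality and a complementary-slackness-style cancellation.
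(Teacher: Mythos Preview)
Your proposal is correct and follows essentially the same route as the paper: weak duality gives $\J_{PPO}(\vpp^*_1)\le g(\bm\lambda^*)$, the first-order condition of the barrier problem together with the uniqueness hypothesis identifies $\vpp^*_2$ as the maximizer of $\mathcal L(\cdot,\bm\lambda^*)$, and substituting $\lambda^*_i$ cancels the slacks. Your caveat about upgrading ``unique stationary point'' to ``global maximizer'' is well taken; the paper makes exactly this leap without further justification, so you have in fact been more careful than the original on this point.
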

\begin{proof}
This result generalizes Theorem~1 of \citep{liu2019ipo}, whose proof implicitly uses convexity (which does not hold in deep RL) and follows from the discussion in page 566 of \citep{BoydVandenberghe04}.

We adapt the proof to our more general setting.
We have:
\begin{align}
    \J_{PPO}(\vpp^*_1) &\le g(\bm\lambda^*) \label{eq:step1}\\
    &= \J_{PPO}(\vpp^*_2) + \sum_{i \in [\nC]} \lambda^*_i ( \Cb_i - \Cf_i(\Yf^{\vpp^*_2}_i)) \label{eq:step2}\\
    &= \J_{PPO}(\vpp^*_2) + \sum_{i \in [\nC]} \frac{1}{\eta_i} \label{eq:step3}
\end{align}
Step \eqref{eq:step1} holds by weak duality because $\lambda^*_i \ge 0$ for all $i \in [\nC]$ (since $\vpp^*_2$ is strictly feasible).
Step \eqref{eq:step2} holds because we have by definition of $\vpp^*_2$:
\begin{equation}
    \nabla_\vpp \J_{PPO}(\vpp^*_2) - 
    \sum_{i \in [\nC]} \frac{\nabla_\vpp \Cf_i(\Yf^{\vpp^*_2}_i)}{\eta_i ( \Cb_i - \Cf_i(\Yf^{\vpp^*_2}_i) )} = 0 \label{eq:gradvpp2}
\end{equation}
which implies that $\vpp^*_2$ maximizes $\mathcal L(\cdot, \bm\lambda^*)$ since $\vpp^*_2$ is assumed to be its unique stationary point.
Step~\eqref{eq:step3} holds by definition of $\bm\lambda^*$.
\end{proof}
The conditions in this theorem are natural.
In order to apply an interior point method, the constrained problem needs to be strictly feasible.
The condition on the stationarity of $\vpp^*_2$ is reasonable and can be controlled by setting $\epsilon$ (used in the clipping function of $\J_{PPO}$) small enough.

As a direct corollary, this result implies that if \eqref{eq:prob} could be solved exactly, the error made by algorithm SDPO is controllable via setting appropriate $\eta_i$'s.
Naturally, in the online RL setting, this assumption does not hold perfectly, but this result still provides some theoretical foundation to our proposition.
In the next section, we validate the algorithm using various experimental settings.

\section{Experimental Results}\label{sec:expe}

The experiments are carried out in three different domains to validate our algorithm: random CMDPs, safety gym, as well as financial investment.
See \paul{Appendix~\ref{app:detail}} for details about hyperparameter settings.

Random CMDPs are CMDPs with $N$ states and $M$ actions, where transition probabilities $\T(s' \mid s, a)$ are randomly assigned with $\lceil \ln N\rceil$ positive values for each pair of state-action, and rewards are sampled from a uniform distribution, i.e., $r(s,a)\sim \mathcal U[0,1]$.
In the experiments, we set $N=1000$ and $M=10$.
We consider two cases: a bound over the variance or a bound over the CVaR, both over the distribution of discounted total rewards.

Safety gym \citep{Ray2019} includes a set of environments designed for evaluating safe exploration in RL.
They all correspond to navigation problems where an agent (i.e., Point, Car, Doggo) moves to some random goal positions to perform some tasks (i.e., Goal, Button, Push) while avoiding entering dangerous hazards or bumping into fragile vases.
Each task has two difficulty levels (i.e., 1 or 2). 
See \paul{Appendix~\ref{app:detail}} for more details.
\paul{For space reasons, we only present a selection of results in this domain 
in the main paper.}
More experiment\paul{al results} in \paul{these} Mujoco \paul{environments} are shown in \paul{Appendix~\ref{app:expe}}.

The third domain is the financial stock market.
\paul{The RL agent} can observe the \jianyi{close} prices of the stocks in one day, i.e., the observation $o_t=\bm p_t=(1,p_{1,t},...,p_{N,t})$ for $N$ selected stocks where the first component corresponds to cash.
We further assume that all transactions are dealt at these prices. 
The action of the agent is defined by a portfolio vector, which \paul{corresponds to allocation weights over cash and} stocks, i.e., $a_t=\bm w_{t+1}=(w_{0,t+1},...,w_{N,t+1})$, $w_0$ \paul{(resp. $w_i$ for $i \in [N]$)} is the weight for cash \paul{(resp. stock $i$) and} $\sum_{i=0}^N w_{i,t}=1$.
Naturally, for each stock, we want to maximize the profit. 
Thus, with reward function $\R_t=\ln\sum_{i=0}^N w_{i,t}\frac{p_{i,t}}{p_{i,t-1}}$, optimizing the undiscounted cumulative rewards can maximize the profit.
We set the CVaR boundary $d_1=0$ to avoid any possible loss.
Detailed settings of the experiment are listed in \paul{Appendix~\ref{app:detail}}.

\begin{figure}[t]
\centering
\subfigure[rewards]{
\includegraphics[trim=10pt 10pt 10pt 10pt,clip,width=1.6in]{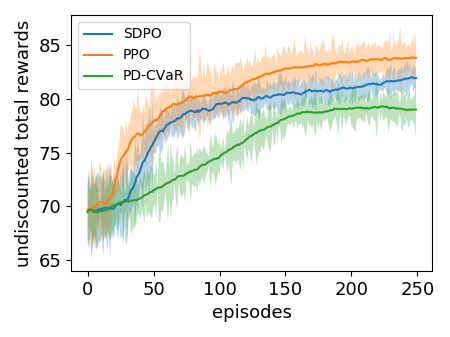}
\label{fig:cmdpcvar1}
}%
\subfigure[constraint]{
\includegraphics[trim=10pt 10pt 10pt 10pt,clip,width=1.6in]{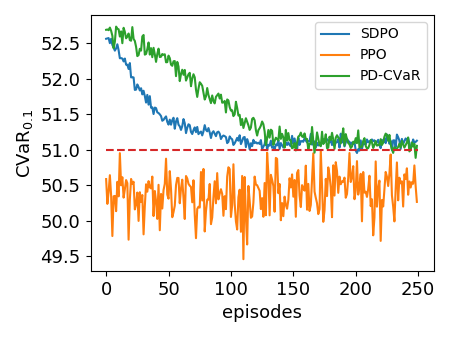}
\label{fig:cmdpcvar2}
}

\caption{\ref{fig:cmdpcvar1}: Average performance over 10 runs of PPO, SDPO and PD-CVaR under the random CMDP for $N=1000$. \ref{fig:cmdpcvar2}: $0.1$-CVaR bounded by $51$. Both SDPO and PD-CVaR converge to the level indicated by the dashed line.
}
\label{fig:cmdpcvar}
\end{figure}

In all our experiments, all the agents are initialized so that they are in a feasible region at the beginning.
In practice, an initial safe policy can be defined using domain knowledge or by an expert, e.g., in Mujoco domain, the agent can be initialized to stay and doing nothing.
For fairness, the PPO agent is also initialized with the same safe policy as all other agents.
Two policy gradient algorithms with CVaR and variance constraints respectively, PD-CVaR \citep{chow2014algorithms} and PD-VAR, which is modified from Algorithm 2 in \citep{prashanth2016variance} \paul{are used} as baselines \paul{in the first domain}.
\paul{SDPO is compared with CPO \citep{AchiamHeldTamarAbbeel17}, PCPO \citep{yang2020projection}, and IPO \citep{liu2019ipo} in the second domain.}
\paul{PPO \citep{SchulmanWolskiDhariwalRadfordKlimov17} is evaluated on all domains to serve as a non-safe RL method.}
\paul{Note that in contrast to our architecture SDPO, none of those algorithms can tackle the problem defined in \eqref{eq:prob1}-\eqref{eq:prob2} in its most general form.}

The experiments are designed \paul{to evaluate SDPO in a variety of domains with various risk constraints and} to answer the following questions:
\textbf{(A)} How does SDPO compare with methods based on Lagrangian relaxation?
\textbf{(B)} How does SDPO compare with other safe RL algorithms? Does the distributional formulation of SDPO help compared to IPO?
\textbf{(C)} How does SDPO perform with multiple constraints (cumulative cost and probability of reaching a bad states)?
\textbf{(D)} How does SDPO \paul{perform} on a real domain? How does the constraint stringency impact the performance of SDPO?


\begin{figure}[t]
\subfigure[rewards]{
\includegraphics[trim=10pt 10pt 10pt 10pt,clip,width=1.6in]{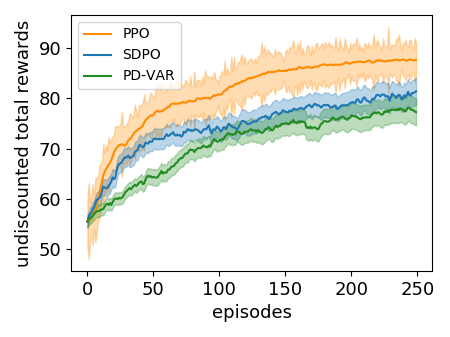}
\label{fig:cmdpvariance1}
}%
\subfigure[constraint]{
\includegraphics[trim=10pt 10pt 10pt 10pt,clip,width=1.6in]{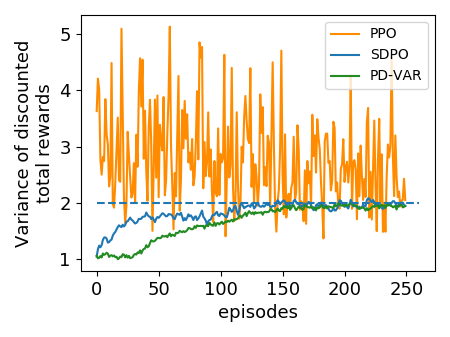}
\label{fig:cmdpvariance2}
}%
\caption{\ref{fig:cmdpvariance1}: Average performance over 10 runs of PPO, SDPO and PD-VAR under the random CMDP for $N=1000$. \ref{fig:cmdpvariance2}: Variance bounded by $2$. Both SDPO and PD-VAR converge to the level indicated by the dashed line.}
\label{fig:cmdpvariance}
\end{figure}

\paragraph{Question (A)} 
To answer (A), we perform some experiments on the first domain, random CMDPs, \paul{with either a constraint on CVaR or a constraint on variance.
Both are based on the rewards. 
Therefore, the first needs to be lower-bounded, while the second needs to be upper-bounded.}
The confidence level is fixed to $\alpha=0.1$ and 
\paul{the bound for CVaR is set to $51$ and that for the variance is set to 2.
The bounds were chosen so that they are not too restrictive.}

From \paul{the} results in \Cref{fig:cmdpcvar}, as expected, PPO without constraint achieves the best total rewards and converges faster than the constrained ones.
When \paul{the} CVaR value is bounded, \paul{PD-CVaR and SDPO both} converge to a slightly worse but safe policy, \paul{however} SDPO converges faster.
From \paul{the} results in \Cref{fig:cmdpvariance}, \paul{similar observations can be drawn for PPO, PD-VAR, and SDPO.
With regards to safety, we can again conclude than SDPO is superior.}


\paragraph{Question (B)} 
To answer (B), we perform some experiments on the second domain, Safety gym, \paul{which is a much more difficult domain than random CMDPs.}
For this domain, we did not evaluate the methods based on Lagrangian relaxation: 
since they do not use a critic, they would not be competitive.
\paul{In Safety gym, the} agent is penalized by receiving a cost $\Cost_1=1$ when touching a fragile vase.
\paul{With a constraint on expected total} cost $\Cf_1(\Yf_1) = \mathbb{E}_{s_0\sim\id, \Yf_1}[\Yf_1(s_0)]\le \Cb_1$, we are able to compare SDPO with other safe RL algorithm\paul{s} like CPO, PCPO and IPO.

\begin{figure}[t]
\centering
\subfigure[rewards]{
\centering
\includegraphics[trim=10pt 10pt 10pt 10pt,clip,width=1.6in]{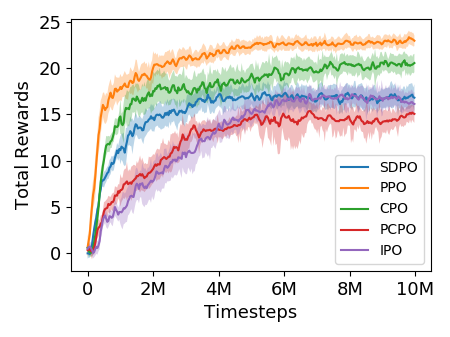}
\label{fig:QBr}
}%
\subfigure[constraint]{
\includegraphics[trim=10pt 10pt 10pt 10pt,clip,width=1.6in]{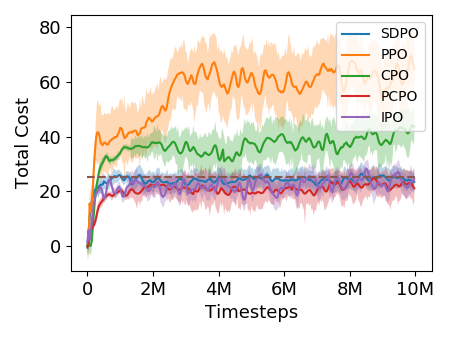}
\label{fig:QBc}
}%
\centering
\caption{Average performance over 10 runs of PPO, SDPO, CPO, PCPO and IPO under Point-Goal1. They are bounded by the dashed line $\Cb_1=25$ in \ref{fig:QBc}.}
\label{fig:QB}
\end{figure}

We only show the \paul{results} for Point-Goal1 in \Cref{fig:QB}. 
For other tasks, please refer to Appendix \ref{app:expe}. 
According to \Cref{fig:QB}, SDPO, PCPO and IPO can explore safely, while CPO cannot satisfy the constraint well.
\paul{This latter observation regarding CPO may be surprising since CPO was designed to solve CMDPs, but similar results were also reported in previous work \citep{Ray2019}.}
Among these three latter algorithms, SDPO and IPO performs the best.
\paul{In \Cref{fig:QB} and in all the Safety-gym environments (see Appendix~\ref{app:expe}), SDPO dominates IPO in terms of either returns or convergence rates (and sometimes both), which confirms the positive contribution of the distributional critics.}

\paragraph{Question (C)}
To demonstrate that SDPO can satisfy multiple constraints, the safety gym environment is used \paul{again, but with a variation}.
We modify the hazard area to be end states where an agent receives a cost $\Cost_2=1$, and the episode is terminated.
\paul{In addition to the previous constraint, another one is enforced:}
$\Cf_2(\Yf_2) = \mathbb{E}_{s_0\sim\id, \Yf_2}[\Yf_2(s_0)]\le \Cb_2$,
where $\Yf_2$ is the undiscounted cumulative cost distribution from cost function $\Cost_2$.
\paul{Here, w}e set the bounds: $\Cb_1=10$ and $\Cb_2=0.1$.

From Figures \ref{fig:mujoco1} and \ref{fig:mujoco2}, PPO without constraints achieves much more goals\paul{, but at the cost of violating all the constraints.}
For constraint $\Cf_2$, both SDPO and IPO agents can avoid entering into hazards during training.
\paul{For} constraint $\Cf_1$, SDPO converges faster than IPO because of the adaption \paul{to} distributional RL.

\begin{figure}[t]
\centering
\subfigure[rewards]{
\centering
\includegraphics[trim=10pt 10pt 10pt 10pt,clip,width=1.6in]{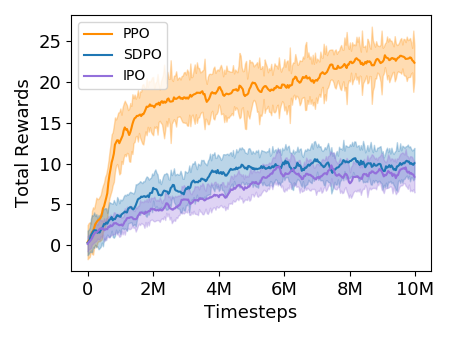}
\label{fig:mujoco1}
}%
\subfigure[constraint\paul{s}]{
\includegraphics[trim=10pt 10pt 10pt 10pt,clip,width=1.6in]{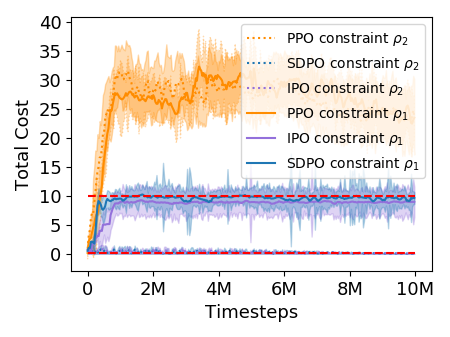}
\label{fig:mujoco2}
}%
\centering
\caption{\ref{fig:mujoco1}: Average performance over 5 runs of PPO, SDPO and IPO under Point-Goal2. \ref{fig:mujoco2}: Average costs of PPO, SDPO and IPO under Point-Goal2.}
\label{fig:mujoco}
\end{figure}

\paragraph{Question (D)}
To answer (D), we switch to the finance domain, where the stock market data \paul{of year 2019} is used.
\paul{
We run SDPO with a constraint on CVaR defined over rewards using different confidence levels $\alpha$.
Note that since the CVaR is defined over rewards, it needs to be lower-bounded.
We also run PPO as a baseline to show the performance without any constraints.
}
\begin{figure}[t]

	\centering
	\includegraphics[trim=10pt 10pt 10pt 10pt,clip,width=1.8in]{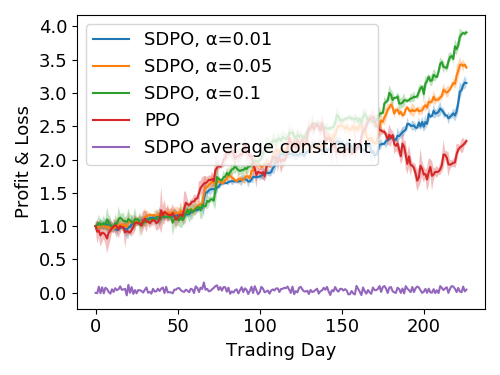}
	\caption{Average performance over 10 runs of PPO and SDPO with confidence level $\alpha=0.01,0.05,0.1$.}
	\label{fig:stock}
\end{figure}

From Figure \ref{fig:stock}, all agents manage to make profits.
With tighter constraint on risk (smaller $\alpha$), the SDPO agent makes less profit.
\paul{While PPO does not satisfy the constraint as expected, the curves for the constraint satisfaction of all SDPO agents are all similar.
We therefore plot their average directly in Figure \ref{fig:stock}.}
PPO without constraint cannot avoid risk and thus suffers from fluctuation and loss at some time point.
\paul{Interestingly, all the SDPO agents eventually perform better than PPO, which demonstrates that enforcing safety does not necessarily prevent good performance.
Finally, SDPO with $\alpha=0.1$ performs best.
}

\section{Conclusion}\label{sec:conclusion}

We presented a general framework for safe RL that encompasses many previous propositions. 
The novelty of our approach is the exploitation of a distributional RL formulation that allows us to deal with sophisticated risk constraints in a natural and efficient way for policy optimization.
Our algorithm, SDPO, is shown to perform well in diverse environments and is competitive with previous algorithms \paul{in situations when they can be applied. 
However}, SDPO can cover a larger range of safety formulations.

\newpage
\bibliography{main}

\newpage
\onecolumn
\appendix

\section{More details on Experiments} \label{app:detail}
\subsection{Random CMDP}
In the random CMDP domain, we constructed it with $N=1000$ states and 10 actions.
The number of randomly-chosen possible successor states is $\lceil\ln N\rceil=7$.
An episode in this CMDP is terminated after 100 time-steps.
To achieve the results in Figures \ref{fig:cmdpcvar} and \ref{fig:cmdpvariance}, we trained the agents 5 times for both random seeds 5 and 10.
The ADAM optimizer is used.
The hyper-parameters are listed in Table \ref{tab:cmdp}.
\begin{table}[h]
    \centering
    \begin{tabular}{c|cccc}
         & PPO & SDPO & PD-CVaR & PD-VAR\\
         \midrule
      discount factor $\gamma$   & 0.99 & 0.99 & 0.99 & 0.99\\
      batch size & 1000 & 1000 & 1000 & 1000\\
      learning rate (actor) & 1e-4 & 1e-4 & 1e-4 & 1e-4\\
      learning rate (critic) & 1e-3 & 1e-3 & / & /\\
      hidden sizes & (64,64) & (64,64) & (64,64) & (64,64)\\
      GAE factor $\lambda$   & 0.9 & 0.9 & / & / \\
      clip range $\epsilon$ & 0.2 & 0.2 & / & /\\
      $\eta$ & / & 20 & / & / \\
      \# quantile atoms $N$ & / & 128 & / & /\\
      \ quantile dimension\footnotemark[1] & / & 256 & / & /\\
      \bottomrule
    \end{tabular}
    \caption{Hyperparameters for experiments on random CMDP.}
    \label{tab:cmdp}
\end{table}

To evaluate the converged policies in \ref{fig:cmdpcvar1} with CVaR constraint, we run them for 1000 episodes each. 
\begin{figure}[t]
 	\centering
 	\includegraphics[width=1.85in]{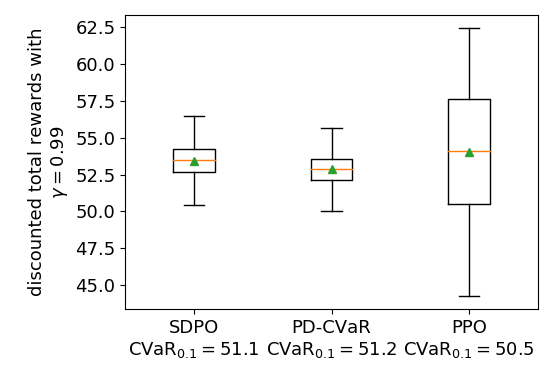}
 	\caption{Evaluation of $\Zf^{\pi_\vpp}$ of trained policy over 1000 runs.}
 	\label{fig:cvarboxpot}
 \end{figure}
Figure \ref{fig:cvarboxpot} indicates that PPO without constraint can reach \paul{the} highest result but suffers \paul{from the} risk of getting the lowest reward.
Both SDPO and PD-CVaR receives a lower mean reward but much higher CVaR values, which indicates lower risk.

\subsection{Stock Transaction}

For the experiment on stock market, we use Quandl\footnote{\url{https://quandl.com}} in Python to load all market data.
The trading agent is assumed to have zero market impact and zero transaction cost.
When conducting this experiment, We choose 9 stocks in SP500 (AAPL, CSCO, DOW, GE, GS, JNJ, JPM, MMM, MSFT). 
The agents are initialized with a safe policy that always holding cash, and then trained in a rolling bias in year 2019 to evaluate the offline performance, i.e., at time step $t$, prices from $t-15$ to $t$ are used for training.
The shared part of the actor and critic network is implemented as an LSTM network.
The hyper-parameters are listed in Table \ref{tab:stock}.
\begin{table}[h]
    \centering
    \begin{tabular}{c|cc}
         & PPO & SDPO\\
         \midrule
      discount factor $\gamma$   & 0.99 & 0.99\\
      batch size & 1280 & 1280\\
      learning rate (actor) & 1e-4 & 1e-4\\
      learning rate (critic) & 1e-3 & 1e-3\\
      hidden sizes & (64,64) & (64,64) \\
      GAE factor $\lambda$   & 0.9 & 0.9\\
      clip range $\epsilon$ & 0.1 & 0.1\\
      $\eta$ & / & 60 \\
      \# quantile atoms $N$ & / & 128\\
      \ quantile dimension & / & 256\\
      \bottomrule
    \end{tabular}
    \caption{Hyperparameters for experiments on stock transaction.}
    \label{tab:stock}
\end{table}
\subsection{Mujoco Simulator}

For the parameters and other settings of the Point-Goal2 domain  
we used the default values set in the source code of Safety-Gym (see line 108 in \href{https://github.com/openai/safety-gym}{\color{blue}safety-gym/safety\_gym/envs/suite.py}).
The hyper-parameters are listed in Table \ref{tab:mujoco}.
\begin{table}[h]
    \centering
    \begin{tabular}{c|ccc}
         & PPO & SDPO & IPO\\
         \midrule
      discount factor $\gamma$ & 0.99 & 0.99 & 0.99 \\
       discount factor for constraints $\gamma_1$, $\gamma_2$ & 1 & 1 & 1 \\
      batch size & 30000 & 30000 & 30000\\
      learning rate (actor) & 1e-4 & 1e-4 & 1e-4\\
      learning rate (critic) & 1e-3 & 1e-3 & 1e-3\\
      hidden sizes (actor) & (256,256) & (256,256) & (256,256) \\
      hidden sizes (critic) & (256,256) & (256,256) & (256,256) \\
      GAE factor $\lambda$   & 0.9 & 0.9 & 0.9\\
      clip range $\epsilon$ & 0.1 & 0.1 & 0.1\\
      $\eta_1$ & / & 40 & 60 \\
      $\eta_2$ & / & 60 & 60 \\
      \# quantile atoms $N$ & / & 128 & /\\
      \ quantile dimension & / & 256 & /\\
      \bottomrule
    \end{tabular}
    \caption{Hyperparameters for experiments on Point-Goal2.}
    \label{tab:mujoco}
\end{table}
\footnotetext[1]{refer to Equation (4) in \citep{dabney2018implicit}}

\section{Additional Experiments} \label{app:expe}

We further compare our method to Constrained Policy Optimization (CPO) \citep{AchiamHeldTamarAbbeel17} and Projection-based Constrained Policy Optimization (PCPO) \citep{yang2020projection}.
PCPO is a two-step approach.
In the first step, the policy is updated in the direction to improve the objective function in the trust region.
In the second step, PCPO projects the potentially infeasible policy back to the constraint set.

To demonstrate the performance of SDPO, compared with PCPO, IPO and CPO, we conduct the experiment in the safety gym environment with constraint $\rho_1$.
Three tasks, Goal, Button and Push with two levels of difficulties are tested in the experiments with agent point, car and dpggo.
Task Goal is to move the agent to a series of goal positions, while task button is to press a series of goal buttons, and task push is to move a box to a series of goal positions.
For detailed explanation of these tasks, please refer to \citep{Ray2019}.
The hyper-parameters in the experiment are the same as \Cref{tab:mujoco}, except $\eta=30$ for SDPO and IPO.
\begin{figure}[h]
\centering
\subfigure[rewards, PointGoal-1]{
\begin{minipage}{0.24\columnwidth}
\centering
\includegraphics[trim=10pt 10pt 10pt 10pt,clip,width=1.6in]{pg1r.png}
\label{fig:pg1r}
\end{minipage}%
}%
\subfigure[constraint, PointGoal-1]{
\begin{minipage}{0.24\columnwidth}
\centering
\includegraphics[trim=10pt 10pt 10pt 10pt,clip,width=1.6in]{pg1c.png}
\label{fig:pg1c}
\end{minipage}%
}
\subfigure[rewards, PointGoal-2]{
\begin{minipage}{0.24\columnwidth}
\centering
\includegraphics[trim=10pt 10pt 10pt 10pt,clip,width=1.6in]{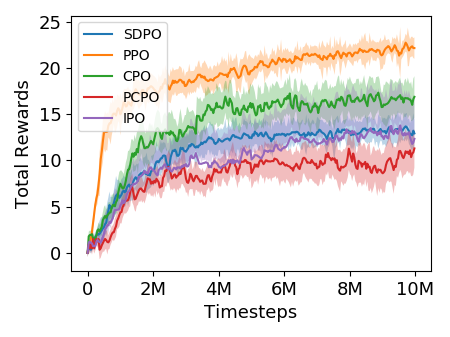}
\label{fig:pg2r}
\end{minipage}%
}%
\subfigure[constraint, PointGoal-2]{
\begin{minipage}{0.24\columnwidth}
\centering
\includegraphics[trim=10pt 10pt 10pt 10pt,clip,width=1.6in]{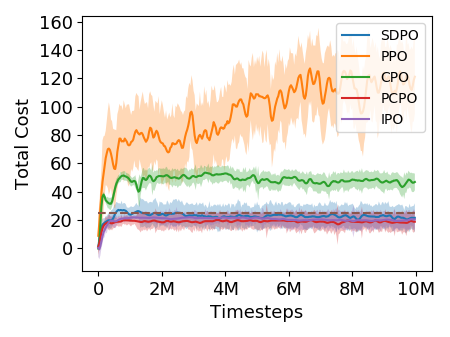}
\label{fig:pg2c}
\end{minipage}%
}
\subfigure[rewards, PointButton-1]{
\begin{minipage}{0.24\columnwidth}
\centering
\includegraphics[trim=10pt 10pt 10pt 10pt,clip,width=1.6in]{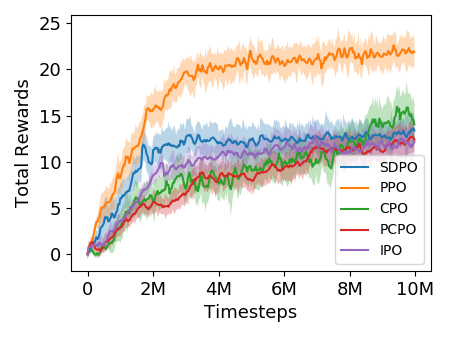}
\label{fig:pb1r}
\end{minipage}%
}%
\subfigure[constraint, PointButton-1]{
\begin{minipage}{0.24\columnwidth}
\centering
\includegraphics[trim=10pt 10pt 10pt 10pt,clip,width=1.6in]{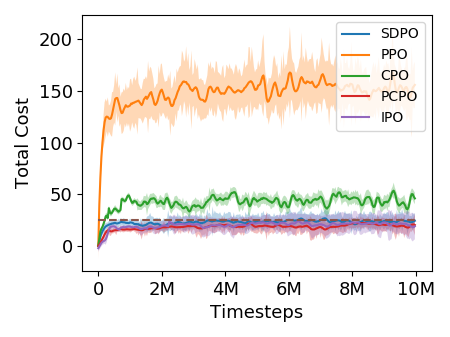}
\label{fig:pb1c}
\end{minipage}%
}
\subfigure[rewards, PointButton-2]{
\begin{minipage}{0.24\columnwidth}
\centering
\includegraphics[trim=10pt 10pt 10pt 10pt,clip,width=1.6in]{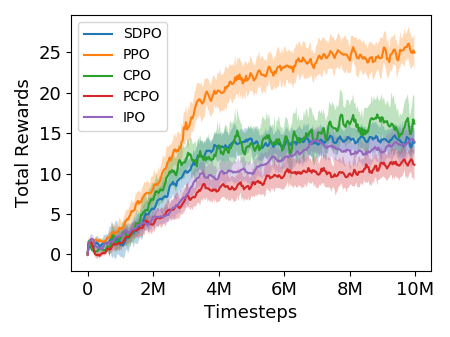}
\label{fig:pb2r}
\end{minipage}%
}%
\subfigure[constraint, PointButton-2]{
\begin{minipage}{0.24\columnwidth}
\centering
\includegraphics[trim=10pt 10pt 10pt 10pt,clip,width=1.6in]{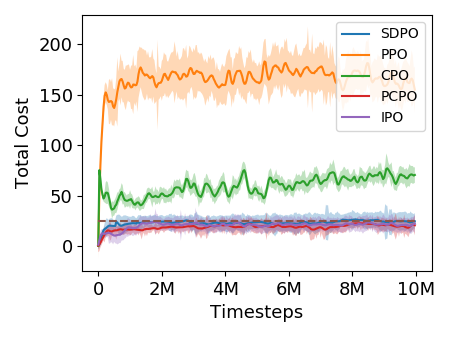}
\label{fig:pb2c}
\end{minipage}%
}
\subfigure[rewards, PointPush-1]{
\begin{minipage}{0.24\columnwidth}
\centering
\includegraphics[trim=10pt 10pt 10pt 10pt,clip,width=1.6in]{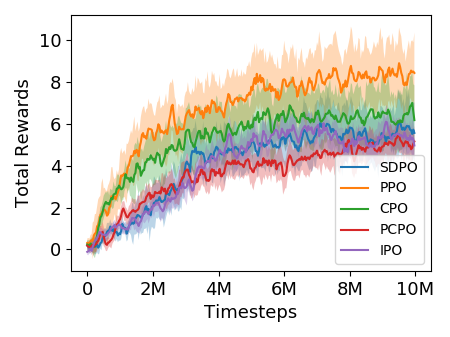}
\label{fig:pp1r}
\end{minipage}%
}%
\subfigure[constraint, PointPush-1]{
\begin{minipage}{0.24\columnwidth}
\centering
\includegraphics[trim=10pt 10pt 10pt 10pt,clip,width=1.6in]{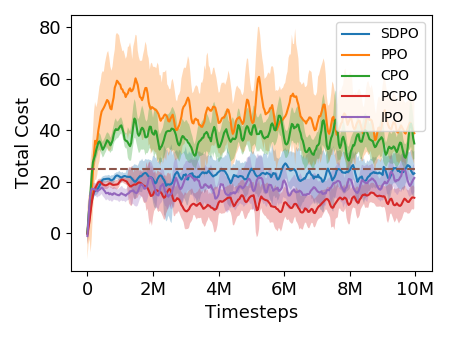}
\label{fig:pp1c}
\end{minipage}%
}
\subfigure[rewards, PointPush-2]{
\begin{minipage}{0.24\columnwidth}
\centering
\includegraphics[trim=10pt 10pt 10pt 10pt,clip,width=1.6in]{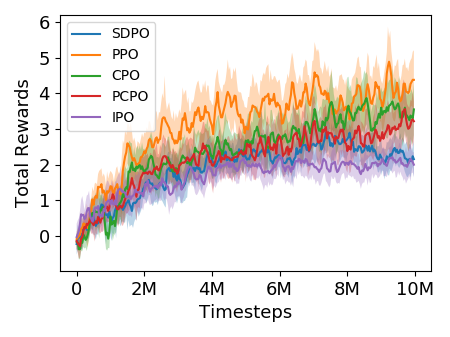}
\label{fig:pp2r}
\end{minipage}%
}%
\subfigure[constraint, PointPush-2]{
\begin{minipage}{0.24\columnwidth}
\centering
\includegraphics[trim=10pt 10pt 10pt 10pt,clip,width=1.6in]{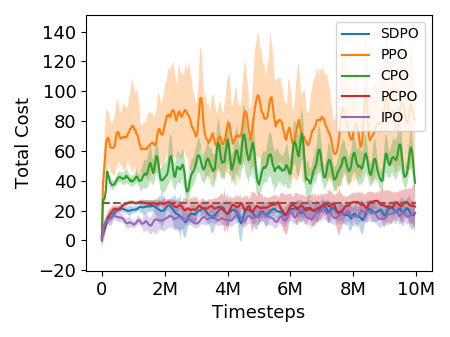}
\label{fig:pp2c}
\end{minipage}%
}
\caption{Average performance of the point agent over 10 runs of PPO, SDPO, PCPO and IPO under Safety-Gym. Both SDPO, PCPO and IPO converge to the level indicated by the dashed line.}
\label{fig:extra_point}
\end{figure}

\begin{figure}[h]
\centering
\subfigure[rewards, CarGoal-1]{
\begin{minipage}{0.24\columnwidth}
\centering
\includegraphics[trim=10pt 10pt 10pt 10pt,clip,width=1.6in]{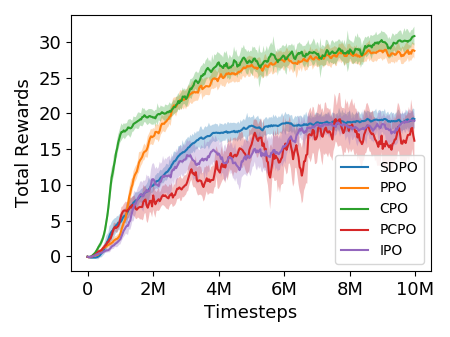}
\label{fig:cg1r}
\end{minipage}%
}%
\subfigure[constraint, CarGoal-1]{
\begin{minipage}{0.24\columnwidth}
\centering
\includegraphics[trim=10pt 10pt 10pt 10pt,clip,width=1.6in]{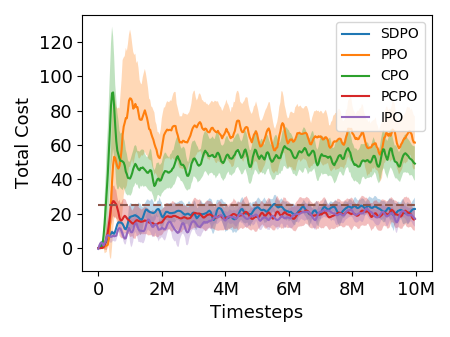}
\label{fig:cg1c}
\end{minipage}%
}
\subfigure[rewards, CarGoal-2]{
\begin{minipage}{0.24\columnwidth}
\centering
\includegraphics[trim=10pt 10pt 10pt 10pt,clip,width=1.6in]{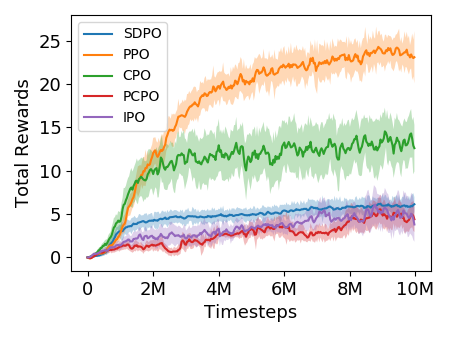}
\label{fig:cg2r}
\end{minipage}%
}%
\subfigure[constraint, CarGoal-2]{
\begin{minipage}{0.24\columnwidth}
\centering
\includegraphics[trim=10pt 10pt 10pt 10pt,clip,width=1.6in]{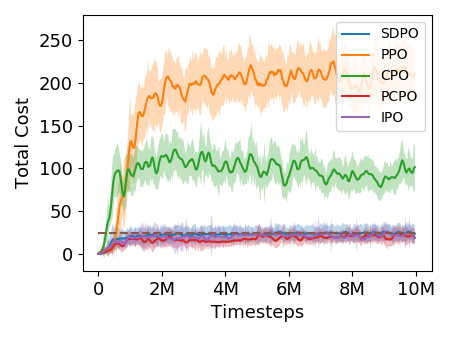}
\label{fig:cg2c}
\end{minipage}%
}
\subfigure[rewards, CarButton-1]{
\begin{minipage}{0.24\columnwidth}
\centering
\includegraphics[trim=10pt 10pt 10pt 10pt,clip,width=1.6in]{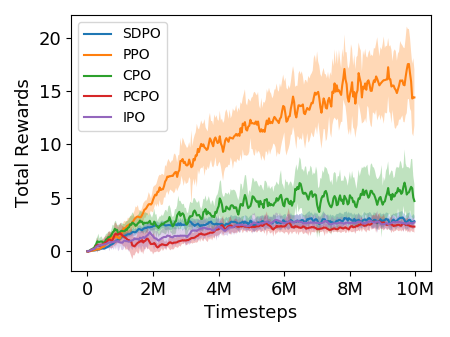}
\label{fig:cb1r}
\end{minipage}%
}%
\subfigure[constraint, CarButton-1]{
\begin{minipage}{0.24\columnwidth}
\centering
\includegraphics[trim=10pt 10pt 10pt 10pt,clip,width=1.6in]{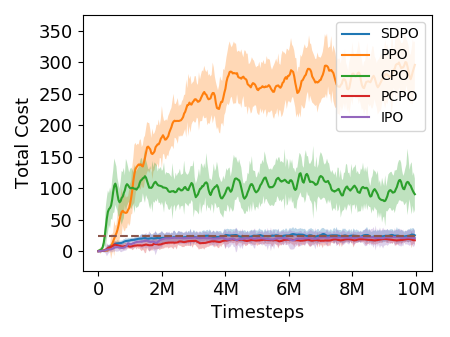}
\label{fig:cb1c}
\end{minipage}%
}
\subfigure[rewards, CarButton-2]{
\begin{minipage}{0.24\columnwidth}
\centering
\includegraphics[trim=10pt 10pt 10pt 10pt,clip,width=1.6in]{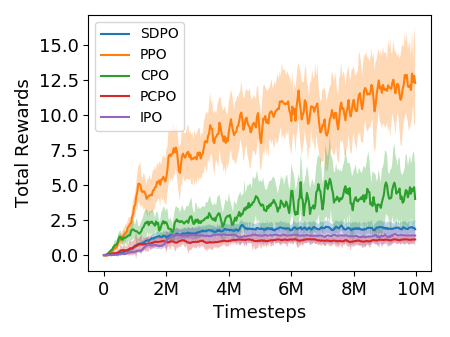}
\label{fig:cb2r}
\end{minipage}%
}%
\subfigure[constraint, CarButton-2]{
\begin{minipage}{0.24\columnwidth}
\centering
\includegraphics[trim=10pt 10pt 10pt 10pt,clip,width=1.6in]{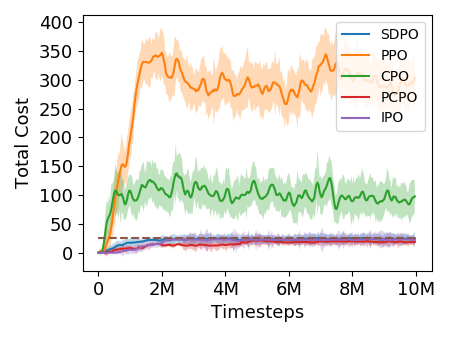}
\label{fig:cb2c}
\end{minipage}%
}
\subfigure[rewards, CarPush-1]{
\begin{minipage}{0.24\columnwidth}
\centering
\includegraphics[trim=10pt 10pt 10pt 10pt,clip,width=1.6in]{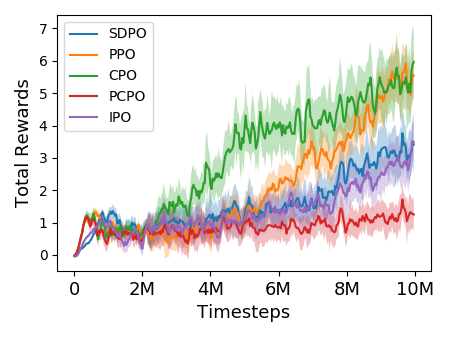}
\label{fig:cp1r}
\end{minipage}%
}%
\subfigure[constraint, CarPush-1]{
\begin{minipage}{0.24\columnwidth}
\centering
\includegraphics[trim=10pt 10pt 10pt 10pt,clip,width=1.6in]{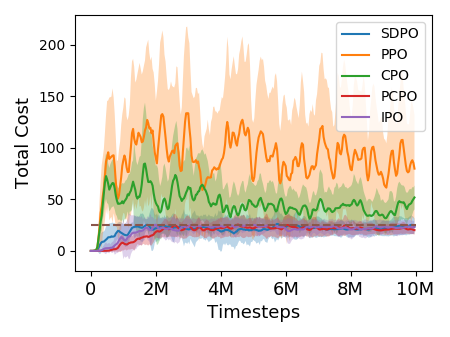}
\label{fig:cp1c}
\end{minipage}%
}
\subfigure[rewards, CarPush-2]{
\begin{minipage}{0.24\columnwidth}
\centering
\includegraphics[trim=10pt 10pt 10pt 10pt,clip,width=1.6in]{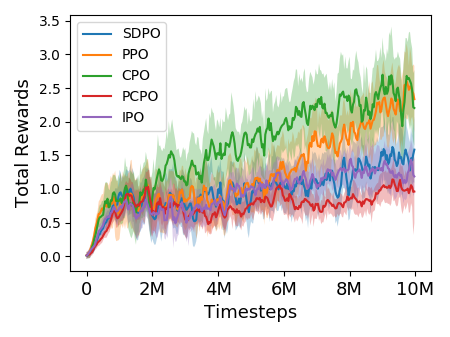}
\label{fig:cp2r}
\end{minipage}%
}%
\subfigure[constraint, CarPush-2]{
\begin{minipage}{0.24\columnwidth}
\centering
\includegraphics[trim=10pt 10pt 10pt 10pt,clip,width=1.6in]{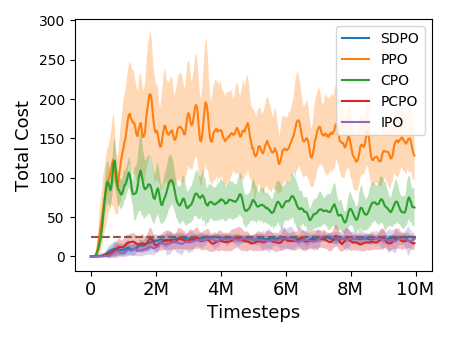}
\label{fig:cp2c}
\end{minipage}%
}
\caption{Average performance of the car agent over 10 runs of PPO, SDPO, PCPO and IPO under Safety-Gym. Both SDPO, PCPO and IPO converge to the level indicated by the dashed line.}
\label{fig:extra_car}
\end{figure}

\begin{figure}[h]
\centering
\subfigure[rewards, DoggoGoal-1]{
\begin{minipage}{0.24\columnwidth}
\centering
\includegraphics[trim=10pt 10pt 10pt 10pt,clip,width=1.6in]{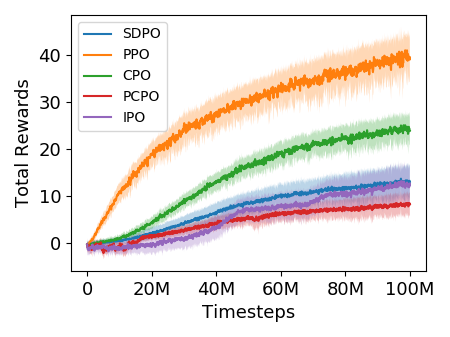}
\label{fig:dg1r}
\end{minipage}%
}%
\subfigure[constraint, DoggoGoal-1]{
\begin{minipage}{0.24\columnwidth}
\centering
\includegraphics[trim=10pt 10pt 10pt 10pt,clip,width=1.6in]{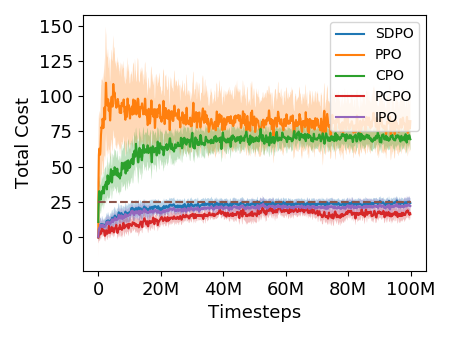}
\label{fig:dg1c}
\end{minipage}%
}
\subfigure[rewards, DoggoGoal-2]{
\begin{minipage}{0.24\columnwidth}
\centering
\includegraphics[trim=10pt 10pt 10pt 10pt,clip,width=1.6in]{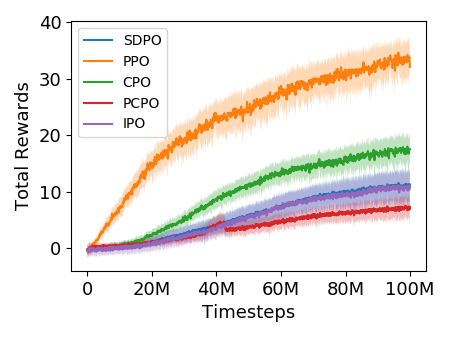}
\label{fig:dg2r}
\end{minipage}%
}%
\subfigure[constraint, DoggoGoal-2]{
\begin{minipage}{0.24\columnwidth}
\centering
\includegraphics[trim=10pt 10pt 10pt 10pt,clip,width=1.6in]{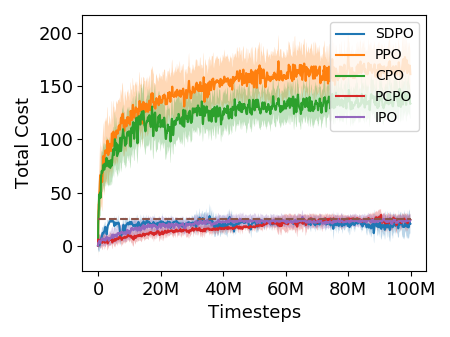}
\label{fig:dg2c}
\end{minipage}%
}
\subfigure[rewards, DoggoButton-1]{
\begin{minipage}{0.24\columnwidth}
\centering
\includegraphics[trim=10pt 10pt 10pt 10pt,clip,width=1.6in]{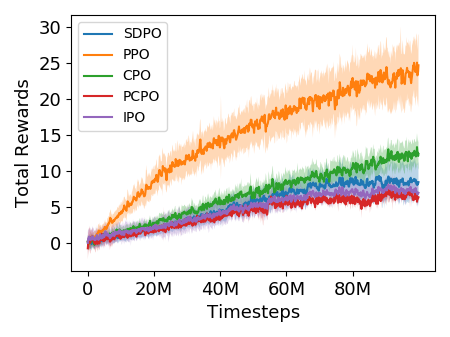}
\label{fig:db1r}
\end{minipage}%
}%
\subfigure[constraint, DoggoButton-1]{
\begin{minipage}{0.24\columnwidth}
\centering
\includegraphics[trim=10pt 10pt 10pt 10pt,clip,width=1.6in]{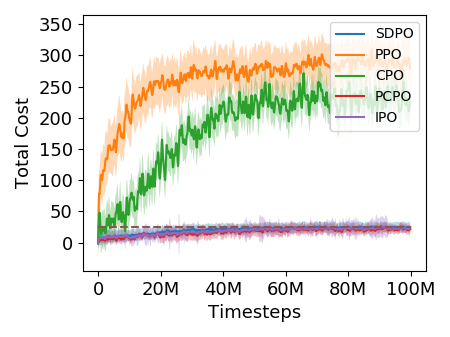}
\label{fig:db1c}
\end{minipage}%
}
\subfigure[rewards, DoggoButton-2]{
\begin{minipage}{0.24\columnwidth}
\centering
\includegraphics[trim=10pt 10pt 10pt 10pt,clip,width=1.6in]{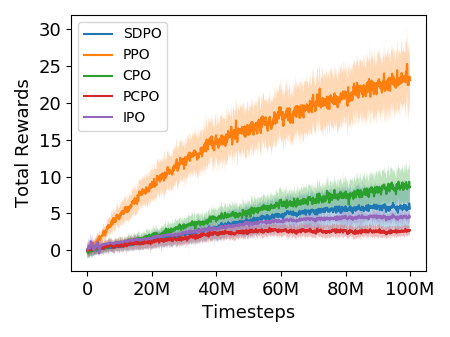}
\label{fig:db2r}
\end{minipage}%
}%
\subfigure[constraint, DoggoButton-2]{
\begin{minipage}{0.24\columnwidth}
\centering
\includegraphics[trim=10pt 10pt 10pt 10pt,clip,width=1.6in]{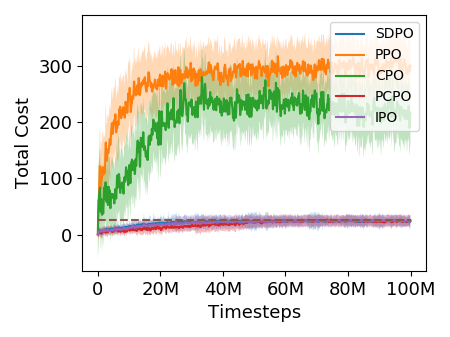}
\label{fig:db2c}
\end{minipage}%
}
\subfigure[rewards, DoggoPush-1]{
\begin{minipage}{0.24\columnwidth}
\centering
\includegraphics[trim=10pt 10pt 10pt 10pt,clip,width=1.6in]{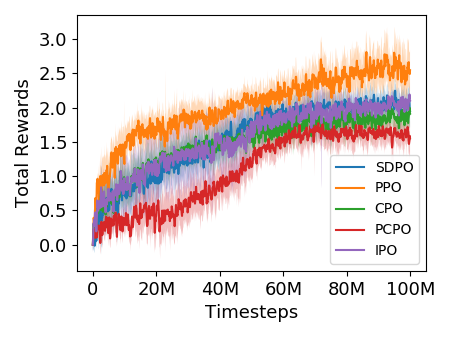}
\label{fig:dp1r}
\end{minipage}%
}%
\subfigure[constraint, DoggoPush-1]{
\begin{minipage}{0.24\columnwidth}
\centering
\includegraphics[trim=10pt 10pt 10pt 10pt,clip,width=1.6in]{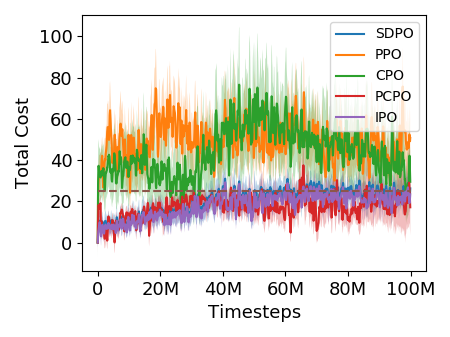}
\label{fig:dp1c}
\end{minipage}%
}
\subfigure[rewards, DoggoPush-2]{
\begin{minipage}{0.24\columnwidth}
\centering
\includegraphics[trim=10pt 10pt 10pt 10pt,clip,width=1.6in]{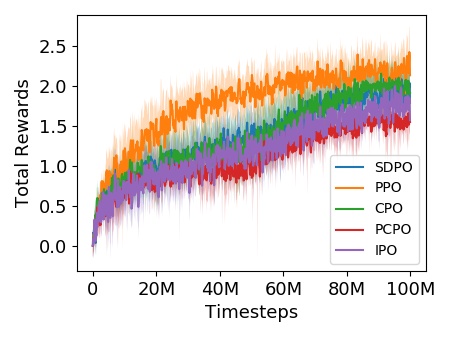}
\label{fig:dp2r}
\end{minipage}%
}%
\subfigure[constraint, DoggoPush-2]{
\begin{minipage}{0.24\columnwidth}
\centering
\includegraphics[trim=10pt 10pt 10pt 10pt,clip,width=1.6in]{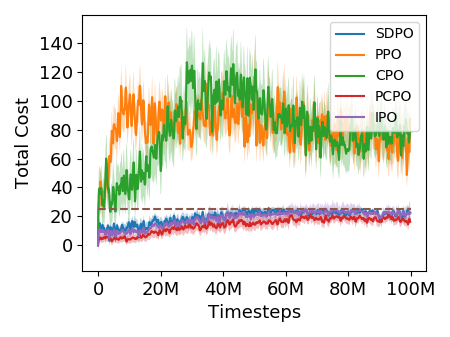}
\label{fig:dp2c}
\end{minipage}%
}
\caption{Average performance of the doggo agent over 10 runs of PPO, SDPO, PCPO and IPO under Safety-Gym. Both SDPO, PCPO and IPO converge to the level indicated by the dashed line.}
\label{fig:extra_doggo}
\end{figure}

From the results in \Cref{fig:extra_point,fig:extra_car,fig:extra_doggo}, SDPO, IPO and PCPO can explore the environment safely, but CPO may failed to learn a safe policy as the tasks go harder (i.e., with the car agent).
In experiments, when the CPO agents violate the constraints, Equation (14) in \citep{AchiamHeldTamarAbbeel17} failed to purely decrease the constraint value, and thus learn an unsafe policy.
For the other three agents, SDPO converges faster to  a slightly better policy than IPO and PCPO.
An interesting future work would be to extend PCPO to the distributional setting as well.

\end{document}